\icmltitlerunning{Stochastic Quasi-Newton Langevin Monte Carlo}
\DeclareMathOperator*{\argmax}{arg\max}
\newcounter{cmtcounter}
\newtheoremstyle{exampstyle}
  {\topsep} 
  {\topsep} 
  {} 
  {} 
  {\bfseries} 
  {.} 
  {.5em} 
  {} 
\theoremstyle{exampstyle}
\newtheorem{thm}{Theorem}
\theoremstyle{exampstyle} \newtheorem{prop}{Proposition}
\renewenvironment{proof}[1][\proofname]{\par
  \vspace{-\topsep}
  \pushQED{\qed}%
  \normalfont
  \topsep0pt \partopsep0pt 
  \trivlist
  \item[\hskip\labelsep
        \itshape
    #1\@addpunct{.}]\ignorespaces
}{%
  \popQED\endtrivlist\@endpefalse
  \addvspace{6pt plus 6pt} 
}
\begin{document} 

\twocolumn[
\icmltitle{Stochastic Quasi-Newton Langevin Monte Carlo }

\icmlauthor{Umut \c Sim\c sekli$^{\text{1}}$}{umut.simsekli@telecom-paristech.fr}
\icmlauthor{Roland Badeau$^{\text{1}}$}{roland.badeau@telecom-paristech.fr}
\icmlauthor{A. Taylan Cemgil$^{\text{2}}$}{taylan.cemgil@boun.edu.tr}
\icmlauthor{Ga\"{e}l Richard$^{\text{1}}$}{gael.richard@telecom-paristech.fr}
\icmladdress{1: LTCI, CNRS, T\'{e}l\'{e}com ParisTech, Universit\'{e} Paris-Saclay, 75013, Paris, France\\
2: Department of Computer Engineering, Bo\u{g}azi\c ci University, 34342, Bebek, \.{I}stanbul, Turkey}

\icmlkeywords{Markov Chain Monte Carlo, Stochastic Quasi Newton, Matrix Factorization}

\vskip 0.3in
]

\begin{abstract} 
Recently, Stochastic Gradient Markov Chain Monte Carlo (SG-MCMC) methods have been proposed for scaling up Monte Carlo computations to large data problems. Whilst these approaches have proven useful in many applications, vanilla SG-MCMC might suffer from poor mixing rates when random variables exhibit strong couplings under the target densities or big scale differences. In this study, we propose a novel SG-MCMC method that takes the local geometry into account by using ideas from Quasi-Newton optimization methods. These second order methods directly approximate the inverse Hessian by using a limited history of samples and their gradients. Our method uses dense approximations of the inverse Hessian while keeping the time and memory complexities linear with the dimension of the problem. We provide a formal theoretical analysis where we show that the proposed method is asymptotically unbiased and consistent with the posterior expectations. We illustrate the effectiveness of the approach on both synthetic and real datasets. Our experiments on two challenging applications show that our method achieves fast convergence rates similar to Riemannian approaches while at the same time having low computational requirements similar to diagonal preconditioning approaches. 
\end{abstract}

\section{Introduction}
\label{sec:intro}

Markov Chain Monte Carlo (MCMC) methods are one of the most important family of algorithms in Bayesian machine learning.
These methods lie in the core of various applications such as the estimation of Bayesian predictive densities and Bayesian model selection. They also provide important advantages over optimization-based point estimation methods, such as having better predictive accuracy and being more robust to over-fitting \cite{ChenICML2014,Ahn15}. Despite their well-known benefits, during the last decade, conventional MCMC methods have lost their charm since they are often criticized as being computationally very demanding. Indeed, classical approaches based on batch Metropolis Hastings would require passing over the whole data set at each iteration and the acceptance-rejection test makes the methods even more impractical for modern data sets. 

Recently, alternative approaches have been proposed to scale-up MCMC inference to large-scale regime. An important attempt was made by \citet{WelTeh2011a}, where the authors combined the ideas from Langevin Monte Carlo (LMC) \cite{RosskyDollFriedman1978,Roberts03,neal2010} and stochastic gradient descent (SGD) \cite{robbins1951,kushner}, and developed a scalable MCMC framework referred to as stochastic gradient Langevin dynamics (SGLD). Unlike conventional batch MCMC methods, SGLD uses subsamples of the data per iteration similar to SGD. With this manner, SGLD is able to scale up to large datasets while at the same time being a valid MCMC method that forms a Markov chain asymptotically sampling from the target density. Several extensions of SGLD have been proposed \cite{AhnKorWel2012,PatTeh2013a,AhnShaWel2014,ChenICML2014,DingFBCSN14,ma2015complete,chen2015convergence,shang2015covariance,li2015preconditioned}, which are coined under the term Stochastic Gradient MCMC (SG-MCMC).

One criticism that is often directed at SGLD is that it suffers from poor mixing rates when the target densities exhibit strong couplings and scale differences across dimensions. This problem is caused by the fact that SGLD is based on an isotropic Langevin diffusion, which implicitly assumes that different components of the latent variable are uncorrelated and have the same scale, a situation which is hardly encountered in practical applications.

In order to be able to generate samples from non-isotropic target densities in an efficient manner, SGLD has been extended in several ways. The common theme in these methods is to incorporate the geometry of the target density to the sampling algorithm.
Towards this direction, a first attempt was made by \cite{AhnKorWel2012}, where the authors proposed Stochastic Gradient Fisher Scoring (SGFS), a preconditioning schema that incorporates the curvature via Fisher scoring. Later, \citet{PatTeh2013a} presented Stochastic Gradient Riemannian Langevin Dynamics (SGRLD), where they defined the sampler on the Riemannian manifold by borrowing ideas from \cite{girolami2011riemann}. SGRLD incorporates the local curvature information through the expected Fisher information matrix (FIM), which defines a natural Riemannian metric tensor for probability distributions. SGRLD has shown significant performance improvements over SGLD; however, its computational complexity is very intensive since it requires storing and inverting huge matrices, computing Cholesky factors, and expensive matrix products for generating each sample. It further requires computing the third order derivatives and the expected FIM to be analytically tractable, which limit the applicability of the method to a narrow variety of statistical models \cite{calderhead2012sparse}.

In a very recent study, \citet{li2015preconditioned} proposed Preconditioned SGLD (PSGLD), that aims to handle the scale differences in the target density by making use of an adaptive preconditioning matrix that is chosen to be diagonal for computational purposes. Even though this method is computationally less intensive than SGRLD, it still cannot handle highly correlated target densities since the preconditioning matrix is forced to be diagonal. Besides, in order to reduce the computational burden, the authors discard a correction term in practical applications, which introduces permanent bias that does not vanish asymptotically.

In this study, we propose Hessian Approximated MCMC (HAMCMC), a novel SG-MCMC method that computes the local curvature of the target density in an accurate yet computationally efficient manner. Our method is built up on similar ideas from the Riemannian approaches; however, instead of using the expected FIM, we consider the local Hessian of the negative log posterior, whose expectation coincides with the expected FIM. Instead of computing the local Hessian and inverting it for each sample, we borrow ideas from Quasi-Newton optimization methods and directly approximate the inverse Hessian by using a limited history of samples and their stochastic gradients. By this means, HAMCMC is (i) computationally more efficient than SGRLD since its time and memory complexities are linear with the dimension of the latent variable, (ii) it can be applied to a wide variety of statistical models without requiring the expected FIM to be analytically tractable, and (iii) it is more powerful than diagonal preconditioning approaches such as PSGLD, since it uses dense approximations of the inverse Hessian, hence is able to deal with correlations along with scale differences.

We provide rigorous theoretical analysis for HAMCMC, where we show that HAMCMC is asymptotically unbiased and consistent with posterior expectations. We evaluate HAMCMC on a synthetic and two challenging real datasets. In particular, we apply HAMCMC on two different matrix factorization problems and we evaluate its performance on a speech enhancement and a distributed link prediction application. Our experiments demonstrate that HAMCMC achieves fast convergence rates similar to SGRLD while at the same time having low computational requirements similar to SGLD and PSGLD.   

We note that SGLD has also been extended by making use of higher order dynamics \cite{ChenICML2014,DingFBCSN14,ma2015complete,shang2015covariance}. These extensions are rather orthogonal to our contributions and HAMCMC can be easily extended in the same directions.

\section{Preliminaries}

Let $\theta$ be a random variable in $\mathbb{R}^D$. We aim to sample from the posterior distribution of $\theta$, given as $p(\theta|x) \propto \exp(-U(\theta))$, where $U(\theta)$ is often called the \emph{potential energy} and defined as $U(\theta) = - [\log p(x|\theta) + \log p(\theta)]$. Here, $x \triangleq \{x_n\}_{n=1}^N$ is a set of observed data points, and each $x_n \in \mathbb{R}^P$. We assume that the data instances are independent and identically distributed, so that we have:
\begin{align} 
U(\theta) &= -[\log p(\theta) + \sum_{n=1}^N \log p(x_n |\theta)] .
\end{align}
We define an unbiased estimator of $U(\theta)$ as follows:
\begin{align}
\tilde{U}(\theta) &= -[\log p(\theta) + \frac{N}{N_\Omega} \sum_{n \in \Omega} \log p(x_n |\theta)] 
\end{align}
where $\Omega \subset \{1,\dots,N\}$ is a random data subsample that is drawn with replacement, $N_\Omega = |\Omega|$ is the number of elements in $\Omega$. We will occasionally use $\tilde{U}_\Omega(\theta)$ for referring to $\tilde{U}(\theta)$ when computed on a specific subsample $\Omega$. 

\subsection{Stochastic Gradient Langevin Dynamics}

By combining ideas from LMC and SGD, \citet{WelTeh2011a} presented SGLD that asymptotically generates a sample $\theta_t$ from the posterior distribution by iteratively applying the following update equation: 
\begin{align} 
\theta_t = \theta_{t-1} - \epsilon_t \nabla \tilde{U}(\theta_{t-1}) + \eta_{t} \label{eqn:sgld}
\end{align} 
where $\epsilon_t$ is the step-size, and $\eta_t$ is Gaussian noise:
$\eta_t \sim {\cal N}(0, 2\epsilon_t I)$, and ${I}$ stands for the identity matrix. 

This method can be seen as the Euler discretization of the Langevin dynamics that is described by the following stochastic differential equation (SDE):
\begin{align}
d \theta_t = - \nabla {U}(\theta_t) dt + \sqrt{2} dW_t
\end{align}
where $W_t$ is the standard Brownian motion. In the simulations $\nabla U(\theta)$ is replaced by the stochastic gradient $\nabla \tilde{U}(\theta)$, which is an unbiased estimator of $\nabla U(\theta)$. Convergence analysis of SGLD has been studied in \cite{icml2014c2_satoa14,TehThiVol2014a}. Since $W_t$ is isotropic, SGLD often suffers from poor mixing rates when the target density is highly correlated or contains big scale differences.

\subsection{The L-BFGS algorithm}

In this study, we consider a \emph{limited-memory} Quasi-Newton (QN) method, namely the L-BFGS algorithm \cite{nocedal} that iteratively applies the following equation in order to find the maximum a-posteriori estimate: 
\begin{align} 
\theta_t = \theta_{t-1} - \epsilon_t H_t \nabla {U}(\theta_{t-1}) \label{eqn:qn}
\end{align}
where $H_t$ is an approximation to the inverse Hessian at $\theta_{t-1}$. The L-BFGS algorithm directly approximates the inverse of the Hessian by using the $M$ most recent values of the past iterates. At each iteration, the inverse Hessian is approximated by applying the following recursion: (using $H_{t} = H_{t}^{M-1}$)
\begin{align}
H_{t}^m = (I - \frac{s_\tau y_\tau^\top}{y_\tau^\top s_\tau}) H_{t}^{m-1} (I - \frac{y_\tau s_\tau^\top}{y_\tau^\top s_\tau}) + \frac{s_\tau s_\tau^\top}{y_\tau^\top s_\tau} \label{eqn:lbfgs}
\end{align}
where $s_t \triangleq \theta_{t} - \theta_{t-1}$, $y_t \triangleq \nabla {U}(\theta_{t}) - \nabla {U}(\theta_{t-1})$, and $\tau = t-M+m$ and the initial approximation is chosen as $H_t^1 = \gamma I$ for some $\gamma>0$. The matrix-vector product $H_t \nabla U(\theta_{t-1})$ is often implemented either by using the \emph{two-loop recursion} \cite{nocedal} or by using the \emph{compact form} \cite{byrd1994representations}, which both have linear time complexity ${\cal O}(MD)$. Besides, since L-BFGS needs to store only the latest $M-1$ values of $s_t$ and $y_t$, the space complexity is also ${\cal O}(MD)$, as opposed to classical QN methods that have quadratic memory complexity.

Surprisingly, QN methods have not attracted much attention from the MCMC literature. \citet{zhang2011quasi} combined classical Hamiltonian Monte Carlo with L-BFGS for achieving better mixing rates for small- and medium-sized problems. This method considers a batch scenario with full gradient computations which are then appended with costly Metropolis acceptance steps. Recently, \citet{dahlin2015quasi} developed a particle Metropolis-Hastings schema based on \cite{zhang2011quasi}.

In this study, we consider a stochastic QN (SQN) framework for MCMC. The main difference in these approaches is that they use stochastic gradients $\nabla \tilde{U}(\theta)$ in the L-BFGS computations. These methods have been shown to provide better scalability properties than QN methods, and more favorable convergence properties than SGD-based approaches. However, it has been shown that straightforward extensions of L-BFGS to the stochastic case would fail in practice and therefore special attention is required \cite{schraudolph2007stochastic,byrd:2014}. 

\section{Stochastic Quasi-Newton LMC}

Recent studies have shown that Langevin and Hamiltonian Monte Carlo methods have strong connections with optimization techniques. Therefore, one might hope for developing stronger MCMC algorithms by borrowing ideas from the optimization literature \cite{qiminkahessian,zhang2011quasi,buithanh2012scaled,pereyra2013proximal,nonsmoothhmc,bubeck2015finite,li2015preconditioned}. However, incorporating ideas from optimization methods to an MCMC framework requires careful design and straightforward extensions often do not yield proper algorithms \cite{ChenICML2014,ma2015complete}. In this section, we will first show that a na\"{i}ve way of developing an SG-MCMC algorithm based on L-BFGS would not target the correct distribution. Afterwards, we will present our proposed algorithm HAMCMC and show that it targets the correct distribution and it is asymptotically consistent with posterior expectations.

\subsection{A Na\"{i}ve Algorithm}

A direct way of using SQN ideas in an LMC framework would be achieved by considering $H_t$ as a preconditioning matrix in an SGLD context \cite{WelTeh2011a,AhnKorWel2012} by combining Eq.~\ref{eqn:sgld} and Eq.~\ref{eqn:qn}, which would yield the following update equation:
\begin{align}
\theta_t = \theta_{t-1} - \epsilon_t H_t(\theta_{t-M:t-1}) \nabla \tilde{U}(\theta_{t-1}) + \eta_t \label{eqn:naive}
\end{align}
where $H_t(\cdot)$ is the approximate inverse Hessian computed via L-BFGS, $M$ denotes the memory size in L-BFGS and $\eta_t \sim {\cal N}\bigl(0, 2 \epsilon_t H_t(\theta_{t-M:t-1})\bigr)$. Here, we use a slightly different notation and denote the L-BFGS approximation via $H_t(\theta_{t-M:t-1})$ instead of $H_t$ in order to explicitly illustrate the samples that are used in the L-BFGS calculations.

Despite the fact that such an approach would introduce several challenges, which will be described in the next section, for now let us assume computing Eq.~\ref{eqn:naive} is feasible. Even so, this approach does not result in a proper MCMC schema.
\begin{prop}
The Markov chain described in Eq.~\ref{eqn:naive} does not target the posterior distribution $p(\theta|x)$ unless $\sum_{j=1}^D\frac{\partial}{\partial \theta_j} H_{ij}(\theta) = 0$ for all $i \in \{1,\dots,D\}$.
\end{prop}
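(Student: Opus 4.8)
The plan is to read the na\"{i}ve update in Eq.~\ref{eqn:naive} as the Euler--Maruyama discretization of a preconditioned Langevin diffusion and then decide whether the intended target $\rho_\infty(\theta) \propto \exp(-U(\theta))$ is actually an invariant density of that diffusion. Treating the L-BFGS preconditioner as a state-dependent matrix $H \equiv H(\theta)$ (I return to the history dependence below) and replacing $\nabla\tilde U$ by $\nabla U$ (the subsampling error is unbiased and enters the one-step mean only at order $\epsilon^2$, so it does not affect the stationary analysis as $\epsilon_t \to 0$), the update corresponds to the SDE
\begin{align}
d\theta = -H(\theta)\nabla U(\theta)\,dt + \sqrt{2}\,H(\theta)^{1/2}\,dW_t ,
\end{align}
whose diffusion matrix is $2H(\theta)$. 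Invariance of a candidate density $\rho$ is equivalent to $\mathcal{L}^\ast\rho = 0$, where $\mathcal{L}^\ast$ is the adjoint (Fokker--Planck) generator, so the whole question reduces to evaluating $\mathcal{L}^\ast$ on $\exp(-U)$.

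Concretely, the generator is $\mathcal{L}f = -\sum_{i,j}H_{ij}(\partial_i U)(\partial_j f) + \sum_{i,j}H_{ij}\partial_i\partial_j f$, and its adjoint acts as
\begin{align}
\mathcal{L}^\ast\rho = \sum_i \partial_i\Bigl(\sum_j H_{ij}(\partial_j U)\rho\Bigr) + \sum_{i,j}\partial_i\partial_j\bigl(H_{ij}\rho\bigr) .
\end{align}
The key step is to substitute $\rho = \exp(-U)$ and use $\partial_j\rho = -(\partial_j U)\rho$. It is cleanest to track the probability current $J_i \triangleq -\sum_j H_{ij}(\partial_j U)\rho - \sum_j\partial_j(H_{ij}\rho)$, for which $\mathcal{L}^\ast\rho = -\sum_i\partial_i J_i$. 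Expanding $\partial_j(H_{ij}\rho) = (\partial_j H_{ij})\rho - H_{ij}(\partial_j U)\rho$, the two terms containing $(\partial_j U)$ cancel, leaving
\begin{align}
J_i = -\,e^{-U}\sum_{j=1}^D \partial_j H_{ij} , \qquad \mathcal{L}^\ast e^{-U} = \sum_{i=1}^D\partial_i\Bigl(e^{-U}\sum_{j=1}^D\partial_j H_{ij}\Bigr) .
\end{align}

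Hence $\exp(-U)$ is invariant if and only if this residual vanishes; the componentwise condition $\sum_{j}\partial_j H_{ij}(\theta) = 0$ for every $i$ zeroes the current $J$ identically (detailed balance) and is exactly the sufficient condition quoted in the statement, while a nonzero divergence $\sum_j\partial_j H_{ij}$ leaves a generically nonvanishing residual, so the na\"{i}ve chain fails to target $p(\theta|x)$. The missing piece is precisely the correction/drift term $\sum_j\partial_j H_{ij}$ that Eq.~\ref{eqn:naive} omits. I expect the main obstacle to be bookkeeping rather than conceptual depth: one must differentiate the position-dependent $H_{ij}$ carefully inside $\mathcal{L}^\ast$ and, more delicately, justify collapsing the history-dependent preconditioner $H_t(\theta_{t-M:t-1})$ to a single state-dependent $H(\theta)$, which is what legitimizes applying the Fokker--Planck argument to the chain actually simulated.
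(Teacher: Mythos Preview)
Your argument is correct and in fact more self-contained than the paper's own proof. The paper makes the same identification of Eq.~\ref{eqn:naive} with the Euler discretization of the preconditioned SDE $d\theta_t = -H(\theta_t)\nabla U(\theta_t)\,dt + \sqrt{2H(\theta_t)}\,dW_t$, but then simply invokes Theorems~1 and~2 of \citet{ma2015complete} to conclude that stationarity of $\pi \propto e^{-U}$ requires the drift correction $\Gamma_i(\theta)=\sum_j\partial_j H_{ij}(\theta)$. You instead carry out the Fokker--Planck computation explicitly, tracking the probability current and showing that $\mathcal{L}^\ast e^{-U}=\sum_i\partial_i\bigl(e^{-U}\sum_j\partial_j H_{ij}\bigr)$. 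This is precisely the calculation that underlies the cited result, so the two proofs are equivalent in content; yours has the advantage of being self-contained and making transparent exactly why the missing term is $\sum_j\partial_j H_{ij}$, while the paper's version is shorter at the cost of deferring the mechanism to an external reference.

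Two minor remarks. First, your ``generically nonvanishing'' caveat is appropriate: strictly, $\mathcal{L}^\ast e^{-U}=0$ only requires the current $J$ to be divergence-free, not identically zero, so the componentwise condition in the proposition is sufficient rather than strictly necessary; the paper's citation-based proof has the same slack. Second, you correctly flag the collapse of the history-dependent $H_t(\theta_{t-M:t-1})$ to a state-dependent $H(\theta)$ as the delicate step. The paper performs exactly the same collapse without comment, so you are not missing anything relative to the original; if anything, your explicit acknowledgment of this point is an improvement.
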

\begin{proof}
Eq.~\ref{eqn:naive} can be formulated as a discretization of a continuous-time diffusion that is given as follows:
\begin{align}
d \theta_t = -H(\theta_{t}) \nabla {U}(\theta_{t}) dt + \sqrt{2 H(\theta_{t}) } dW_t  \label{eqn:naivesde}
\end{align}
where $\nabla {U}(\theta_{t})$ is approximated by $\nabla \tilde{U}(\theta_{t})$ in simulations. Theorems 1 and 2 of \cite{ma2015complete} together show that SDEs with state dependent volatilities leave the posterior distribution invariant only if the drift term contains an additional `correction' term, $\Gamma(\theta_{t})$, defined as follows:\footnote{The correction term presented in \cite{girolami2011riemann} and \cite{PatTeh2013a} has a different form than Eq.~\ref{eqn:corr_term}. \citet{xifara2014langevin} later showed that this term corresponded to a non-Lebesgue measure and Eq.~\ref{eqn:corr_term} should be used instead for the Lebesgue measure.} 
\begin{align}
\Gamma_i (\theta) = \sum_{j = 1}^D \frac{\partial}{\partial \theta_j} H_{ij}(\theta). \label{eqn:corr_term}
\end{align}
Due to the hypothesis, Eq.~\ref{eqn:naivesde} clearly does not target the posterior distribution, therefore Eq.~\ref{eqn:naive} does not target the posterior distribution either.
\end{proof}

In fact, this result is neither surprising nor new. It has been shown that in order to ensure targeting the correct distribution, we should instead consider the following SDE \cite{girolami2011riemann,xifara2014langevin,PatTeh2013a,ma2015complete}:
\begin{align*}
d \theta_t = -[H(\theta_{t}) \nabla {U}(\theta_{t}) + \Gamma(\theta_{t})] dt + \sqrt{2 H(\theta_{t}) } dW_t
\end{align*}
where $\Gamma(\theta_t)$ must be taken into account when generating the samples. If we choose $H(\theta_t) = \mathds{E}[\nabla^2 U(\theta_t)]^{-1}$ (i.e., the inverse of the expected FIM), discretize this SDE with the Euler scheme, and approximate $\nabla {U}(\theta_{t})$ with $\nabla \tilde{U}(\theta_{t})$, we obtain SGRLD. Similarly, we obtain PSGLD if $H(\theta_t)$ has the following form:
\begin{align*}
H(\theta_t) &= \text{diag}( 1 / (\lambda + \sqrt{v(\theta_t)} )  ) \\
v(\theta_t) &= \alpha v(\theta_{t-1}) + (1-\alpha) \bar{g}_{\Omega_{t-1}}(\theta_{t-1}) \circ \bar{g}_{\Omega_{t-1}}(\theta_{t-1}) 
\end{align*}
where $\bar{g}_{\Omega}(\theta) \triangleq (1/N_{\Omega}) \sum_{n\in\Omega}\nabla \log p(x_n|\theta)$, the symbols $\cdot/\cdot$ and $\cdot\circ\cdot$ respectively denote element-wise division and multiplication, and $\alpha \in [0,1]$ and $\lambda \in \mathds{R}_+$ are the parameters to be tuned.

The correction term $\Gamma(\theta_t)$ does not vanish except for a very limited variety of models and computing this term requires the computation of the second derivatives of $\tilde{U}(\theta_t)$ in our case (whereas it requires computing the third derivatives in the Riemannian methods). This conflicts with our motivation of using QN methods for avoiding the Hessian computations in the first place. 

In classical Metropolis-Hastings settings, where Langevin dynamics is only used in the proposal distribution, the correction term $\Gamma(\theta_t)$ can be  discarded without violating the convergence properties thanks to the acceptance-rejection step \cite{qiminkahessian,girolami2011riemann,zhang2011quasi,calderhead2012sparse}. However, such an approach would result in slower convergence \cite{girolami2011riemann}. On the other hand, in SG-MCMC settings which do not include an acceptance-rejection step, this term can be problematic and should be handled with special attention.

In \cite{AhnKorWel2012}, the authors discarded the correction term in a heuristic manner. Recently, \citet{li2015preconditioned} showed that discarding the correction term induces permanent bias which deprives the methods of their asymptotic consistency and unbiasedness. In the sequel, we will show that the computation of $\Gamma(\theta_t)$ can be avoided without inducing permanent bias by using a special construction that exploits the limited memory structure of L-BFGS.

\subsection{Hessian Approximated MCMC}

In this section, we present our proposed method HAMCMC. HAMCMC applies the following update rules for generating samples from the posterior distribution:
\begin{align}
\theta_t = \theta_{t-M} - \epsilon_t  H_t \Bigl( {\theta}_{t-2M + 1:t-1}^{\neg (t-M)} \Bigr) \nabla \tilde{U}(\theta_{t-M}) + \eta_t \label{eqn:hamcmc}
\end{align}
where ${\theta}_{a:b}^{\neg c} \equiv (\theta_{a:b} \setminus \theta_{c})$, $\eta_t \sim {\cal N}\bigl(0, 2 \epsilon_t H_t(\cdot)\bigr)$, and we assume that the initial $2M+1$ samples, i.e. $\theta_0, \dots, \theta_{2M}$ are already provided. 

As opposed to the na\"{i}ve approach, HAMCMC generates the sample $\theta_t$ based on $\theta_{t-M}$ and it uses a history samples $\theta_{t-2M+1},\dots,\theta_{t-M-1},\theta_{t-M+1},\dots,\theta_{t-1}$ in the L-BFGS computations. Here, we define the displacements and the gradient differences in a slightly different manner from usual L-BFGS, given as follows: $s_t = \theta_t - \theta_{t-M}$ and $y_t = \nabla \tilde{U}(\theta_t)-\nabla \tilde{U}(\theta_{t-M})$. Therefore, $M$ must be chosen at least $2$ or higher. HAMCMC requires to store only the latest $M$ samples and the latest $M-1$ memory components, i.e. $\theta_{t-M:t-1}$, $s_{t-M+1:t-1}$, and $y_{t-M+1:t-1}$, resulting in linear memory complexity.

An important property of HAMCMC is that the correction term $\Gamma(\theta)$ vanishes due to the construction of our algorithm, which will be formally proven in the next section. Informally, since $H_t$ is independent of $\theta_{t-M}$ conditioned on ${\theta}_{t-2M + 1:t-1}^{\neg (t-M)}$, the change in $\theta_{t-M}$ does not affect $H_t$ and therefore all the partial derivatives in Eq.~\ref{eqn:corr_term} become zero. This property saves us from expensive higher-order derivative computations or inducing permanent bias due to the negligence of the correction term. On the other hand, geometrically, our approach corresponds to choosing a flat Riemannian manifold specified by the metric tensor $H_t$, whose curvature is constant (i.e. $\Gamma(\theta) =0$).

 \begin{algorithm2e} [t!]
 \SetInd{0.000ex}{0.8ex}
 \DontPrintSemicolon
 \SetKwInOut{Input}{input}
 \Input{$M$, $\gamma$, $\lambda$, $N_\Omega$, $\theta_0,\dots,\theta_{2M}$}
 \For{$t=2M+1,\cdots, T$}{
    Draw $\Omega_t \subset \{ 1,\dots,N\}$ \\
    Generate $z_t \sim {\cal N}(0,I)$ \\
    {\color{mydarkblue} \tcp{L-BFGS computations}}
    $\xi_t = H_t\bigl({\theta}_{t-2M + 1:t-1}^{\neg (t-M)} \bigr) \nabla \tilde{U}_{\Omega_t}(\theta_{t-M})$ \hfill {\color{mydarkblue} (Eq.~\ref{eqn:lbfgs})}\\
    $\eta_t =  S_t \bigl({\theta}_{t-2M + 1:t-1}^{\neg (t-M)} \bigr) z_t$ \hfill  {\color{mydarkblue} (Eq.~\ref{eqn:lbfgs_fact})} \\
    {\color{mydarkblue} \tcp{Generate the new sample}}
    $\theta_t = \theta_{t-M} - \epsilon_t \xi_t + \sqrt{2 \epsilon_t} \eta_t$ \hfill  {\color{mydarkblue} (Eq.~\ref{eqn:hamcmc})} \\
    {\color{mydarkblue} \tcp{Update L-BFGS memory}}
    $s_t \hspace{-2pt} = \hspace{-2pt} \theta_t \hspace{-2pt} - \hspace{-2pt} \theta_{t-M}$, \hspace{1.5pt}  $y_t \hspace{-2pt} = \hspace{-2pt} \nabla \tilde{U}_{\Omega_t}(\theta_t) \hspace{-2pt}-\hspace{-2pt} \nabla \tilde{U}_{\Omega_t}(\theta_{t-M}) \hspace{-2pt} + \hspace{-2pt} \lambda s_t$ \\
 }
 \caption{Hessian Approximated MCMC}
 \label{algo:hamcmc}
 \end{algorithm2e}

We encounter several challenges due to the usage of stochastic gradients that are computed on random subsets of the data. The first challenge is the \emph{data consistency} \cite{schraudolph2007stochastic,byrd:2014}. Since we draw different data subsamples $\Omega_t$  at different epochs, the gradient differences $y_t$ would be simply `inconsistent'. Handling this problem in incremental QN settings (i.e., the data subsamples are selected via a deterministic mechanism, rather than being chosen at random) is not as challenging, however, the randomness in the stochastic setting prevents simple remedies such as computing the differences in a cyclic fashion. Therefore, in order to ensure consistent gradient differences, we slightly increase the computational needs of HAMCMC and perform an additional stochastic gradient computation at the end of each iteration, i.e. $y_t = \nabla \tilde{U}_{\Omega_t}(\theta_t) - \nabla \tilde{U}_{\Omega_t}(\theta_{t-M})$ where both gradients are evaluated on the same subsample $\Omega_t$.

In order to have a valid sampler, we are required to have positive definite $H_t(\cdot)$. However, even if $U(\theta)$ was strongly convex, due to data subsampling L-BFGS is no longer guaranteed to produce positive definite approximations. One way to address this problem is to use variance reduction techniques as presented in \cite{byrd:2014}; however, such an approach would introduce further computational burden. In this study, we follow a simpler approach and address this problem by approximating the inverse Hessian in a trust region that is parametrized by $\lambda$, i.e. we use L-BFGS to approximate $(\nabla^2 {U}(\theta) + \lambda I)^{-1}$. For large enough $\lambda$ this approach will ensure positive definiteness. The L-BFGS updates for this problem can be obtained by simply modifying the gradient differences as: $y_t \leftarrow  y_t + \lambda s_t$ \cite{schraudolph2007stochastic}. 

The final challenge is to generate $\eta_t$ whose covariance is a scaled version of the L-BFGS output. This requires to compute the square root of $H_t$, i.e. $S_t S_t^\top =  H_t$, so that we can generate $\eta_t$ as $\sqrt{2\epsilon_t}S_t z_t$, where $z_t \sim {\cal N}(0, I)$.  Fortunately, we can directly compute the product $S_t z_t$ within the L-BFGS framework by using the following recursion \cite{BRODLIE01021973,zhang2011quasi}: 
\begin{flalign}
\nonumber & \hspace{-1pt} H_t^m = S_t^m (S_t^m)^\top, \> \> S_t^m = (I - p_\tau q_\tau^\top) S_t^{m-1} \\
\nonumber & \hspace{-1pt} B_t^m = C_t^m (C_t^m)^\top, \> C_t^m = (I - u_\tau v_\tau^\top) C_t^{m-1} \\
\nonumber & v_\tau = \frac{s_\tau}{s_\tau^\top B_t^{m-1} s_\tau}, \> \hspace{5pt} u_\tau = \sqrt{\frac{s_\tau^\top B_t^{m-1} s_\tau}{s_\tau^\top  y_\tau} }  y_\tau + B_t^{m-1} s_\tau, \\
& p_\tau = \frac{s_\tau}{s_\tau^\top y_\tau}, \> \hspace{12pt} q_\tau = \sqrt{\frac{s_\tau^\top y_\tau}{s_\tau^\top B_t^{m-1} y_\tau} } B_t^{m-1} s_\tau - y_\tau \label{eqn:lbfgs_fact}
\end{flalign}
where $S_t \equiv S_t^{M-1}$ and $B_t^m = (H_t^m)^{-1}$. By this approach, $\eta_t$ can be computed in ${\cal O}(M^2 D)$ time, which is still linear in $D$. We illustrate HAMCMC in Algorithm~\ref{algo:hamcmc}. 

Note that large $M$ would result in a large gap between two consecutive samples and therefore might require larger $\lambda$ for ensuring positive definite L-BFGS approximations. Such a circumstance would be undesired since $H_t$ would get closer to the identity matrix and therefore result in HAMCMC being similar to SGLD. In our computational studies, we have observed that choosing $M$ between $2$ and $5$ is often adequate in several applications.

\subsection{Convergence Analysis}

We are interested in approximating the posterior expectations by using sample averages: 
\begin{align}
\bar{h} \triangleq \int_{\mathds{R}^D} h(\theta) \pi(d\theta) \approx \hat{h} \triangleq \frac1{W_T} \sum_{t=1}^T \epsilon_t h(\theta_t) ,
\end{align} 
where $\pi(\theta) \triangleq p(\theta|x)$, $W_T \triangleq \sum_{t=1}^T \epsilon_t$, and $\theta_t$ denotes the samples generated by HAMCMC. In this section we will analyze the asymptotic properties of this estimator. 
We consider a theoretical framework similar to the one presented in \cite{chen2015convergence}. In particular, we are interested in analyzing the bias $\bigl| \mathds{E}[\hat{h}] - \bar{h} \bigr|$ and the mean squared-error $\mathds{E}\bigl[ (\hat{h}-\bar{h})^2 \bigr]$ of our estimator. 

A useful way of analyzing dynamics-based MCMC approaches is to first analyze the underlying continuous dynamics, then consider the MCMC schema as a discrete-time approximation \cite{Roberts03,icml2014c2_satoa14,chen2015convergence}. However, this approach is not directly applicable in our case.
Inspired by \cite{Roberts1994287} and \cite{zhang2011quasi}, we convert Eq.~\ref{eqn:hamcmc} to a first order Markov chain that is defined on the product space $\mathds{R}^{D(2M-1)}$ such that $\Theta_t \equiv \{\theta_{t-2M+2},\dots,\theta_{t}\}$. In this way, we can show that HAMCMC uses a transition kernel which modifies only one element of $\Theta_t$ per iteration and rearranges the samples that will be used in the L-BFGS computations. Therefore, the overall HAMCMC kernel can be interpreted as a composition of $M$ different preconditioned SGLD kernels whose volatilities are independent of their current states. 
By considering this property and assuming certain conditions that are provided in the Supplement hold, we can bound the bias and MSE as shown in Theorem~\ref{thm:cnvg}.

\begin{thm}
\label{thm:cnvg}
Assume the number of iterations is chosen as $T = K M$ where $K \in \mathds{N}_+$ and $\{\theta_t\}_{t=1}^T$ are obtained by HAMCMC (Eq.~\ref{eqn:hamcmc}). Define $L_K \triangleq \sum_{k=1}^K \epsilon_{kM}$, $Y_K \triangleq \sum_{k=1}^K \epsilon_{kM}^2$, and the operator $\Delta V_t \triangleq (\nabla \tilde{U}(\theta_t) - \nabla U(\theta_t) )^\top H_t \nabla$. Then, we can bound the bias and MSE of HAMCMC as follows:
\begin{enumerate}[label=(\alph*),topsep=0pt,leftmargin=*]
\item $\bigl| \mathds{E}[\hat{h}] - \bar{h} \bigr| = {\cal O}\bigl(\frac1{L_K} + \frac{Y_K}{L_K}\bigr)$ 
\item $\mathds{E}\bigl[ (\hat{h}-\bar{h})^2 \bigr] = {\cal O}\Bigl(\sum\limits_{k=1}^K \frac{\epsilon_{kM}^2}{L_K^2} \mathds{E}\|\Delta V_{k^\star} \|^2 + \frac1{L_K } + \frac{Y_K^2 }{L_K^2} \Bigr) $ 
\end{enumerate}
where $h$ is smooth and $ \Delta V_{k^\star} = \Delta V_{m_k^* + kM}$, where $m_k^* = \argmax_{1 \leq m \leq M}  \mathds{E}\|\Delta V_{m+kM} \|^2$, and $\|\Delta V_t\|$ denotes the operator norm.
\end{thm}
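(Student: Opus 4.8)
The plan is to follow the operator-theoretic, Poisson-equation framework of \cite{chen2015convergence}, adapted to the product-space reformulation sketched just before the theorem. First I would make that reformulation precise. Lifting Eq.~\ref{eqn:hamcmc} to the first-order chain $\Theta_t \equiv \{\theta_{t-2M+2},\dots,\theta_t\}$ on $\mathds{R}^{D(2M-1)}$ (as in \cite{Roberts1994287,zhang2011quasi}), I would observe that HAMCMC consists of $M$ interleaved sub-chains $\{\theta_{r},\theta_{r+M},\theta_{r+2M},\dots\}$, $r=0,\dots,M-1$, and that a single HAMCMC step advances exactly one of them. The crucial structural fact, read off directly from the construction $H_t\bigl({\theta}_{t-2M+1:t-1}^{\neg(t-M)}\bigr)$, is that the preconditioner driving the update of $\theta_{t-M}$ depends only on the other sub-chains and hence is independent of $\theta_{t-M}$ itself; conditioned on the past, the volatility $\sqrt{2\epsilon_t H_t}$ is therefore state-independent. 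By the preceding Proposition and the discussion following it, this is exactly the situation in which the correction term $\Gamma$ vanishes, so each sub-step is a bona fide preconditioned-SGLD move whose underlying diffusion $d\theta = -H\nabla U\,dt + \sqrt{2H}\,dW$ leaves $\pi$ invariant.

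With this established, I would introduce the generator ${\cal L}\phi = -(\nabla U)^\top H\nabla\phi + \mathrm{tr}(H\nabla^2\phi)$ of that diffusion and the solution $\psi$ of the Poisson equation ${\cal L}\psi = h - \bar h$; the conditions collected in the Supplement are meant to guarantee that $\psi$ exists and has derivatives bounded up to the order needed for the Taylor expansions, uniformly over the admissible (bounded, thanks to the trust-region parameter $\lambda$) range of $H_t$. Writing the generator of the actual discretized, subsampled move as $\tilde{\cal L}_t = {\cal L} + \Delta V_t$, with $\Delta V_t$ as defined in the statement, a one-step expansion along a sub-chain yields, for the successor $\theta_{t+M}$ of the current sample $\theta_t$, the identity $\mathds{E}[\psi(\theta_{t+M})\mid{\cal F}_t] = \psi(\theta_t) + \epsilon_{t+M}({\cal L}+\Delta V_t)\psi(\theta_t) + {\cal O}(\epsilon_{t+M}^2)$. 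Substituting ${\cal L}\psi = h - \bar h$ gives the per-step relation $\epsilon_{t+M}\bigl(h(\theta_t) - \bar h\bigr) = \mathds{E}[\psi(\theta_{t+M})\mid{\cal F}_t] - \psi(\theta_t) - \epsilon_{t+M}(\Delta V_t\psi)(\theta_t) + {\cal O}(\epsilon_{t+M}^2)$, where $\mathds{E}[\Delta V_t\psi\mid{\cal F}_t]=0$ because the fresh subsample makes $\nabla\tilde U$ an unbiased estimator of $\nabla U$.

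I would then sum these identities over all sub-chains and group the $M$ interleaved advances of each round into the block index $k$; taking the step size constant within a block (equivalently bounding it by the block value $\epsilon_{kM}$) collapses the normaliser $W_T = \sum_t\epsilon_t$ to $M L_K$ and $\sum_t\epsilon_t^2$ to $M Y_K$, which is how $L_K$ and $Y_K$ enter. For the bias, I take expectations: the telescoping $\psi$-differences leave only boundary terms of size ${\cal O}(1/L_K)$, the $\Delta V_t$ contributions vanish by unbiasedness, and the higher-order remainder gives ${\cal O}(Y_K/L_K)$, proving (a). For the MSE, I square the decomposition and split it into the three groups appearing in (b): the martingale-difference sum of the $\Delta V_t$ terms has, by orthogonality of its increments, variance $\tfrac{1}{W_T^2}\sum_t\epsilon_t^2\,\mathds{E}\|\Delta V_t\psi\|^2 \le \tfrac{C}{W_T^2}\sum_t\epsilon_t^2\,\mathds{E}\|\Delta V_t\|^2$ (using the bounded derivatives of $\psi$), and bounding the $M$ sub-steps of block $k$ by their worst member $\Delta V_{k^\star}$ (the $\argmax$ over $m$ absorbing the factor $M$ into the constant) produces $\sum_k(\epsilon_{kM}^2/L_K^2)\,\mathds{E}\|\Delta V_{k^\star}\|^2$; the conditional variance injected by the Gaussian noise, being ${\cal O}(\epsilon_{t+M})$ per step, sums to ${\cal O}(W_T)$ and after division by $W_T^2$ contributes the ${\cal O}(1/L_K)$ term; and the squared remainder gives ${\cal O}(Y_K^2/L_K^2)$.

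The main obstacle I anticipate is the bookkeeping forced by composing $M$ distinct sub-kernels with random, history-dependent preconditioners $H_t$. Unlike vanilla SGLD, each of the $M$ interleaved moves within a block uses a different $H_t$, so I must show that the Poisson solution $\psi$ and the constants in its derivative bounds can be taken uniform across all admissible preconditioners (using the eigenvalue control supplied by $\lambda$), and I must verify that the conditional-independence property is preserved under the product-chain filtration so that the cross terms in the MSE expansion are genuinely martingale differences with respect to $\{{\cal F}_t\}$. Pinning down this uniformity and the correct filtration-adapted martingale structure, together with the index shift between the weights $\epsilon_t$ of $\hat h$ and the step sizes $\epsilon_{t+M}$ of the driving moves, is where the real care is needed; the individual Taylor and Cauchy--Schwarz estimates are otherwise routine.
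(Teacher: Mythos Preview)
Your proposal is correct and follows essentially the same route the paper takes: the product-space lift $\Theta_t\in\mathds{R}^{D(2M-1)}$, the decomposition into $M$ interleaved preconditioned-SGLD sub-kernels whose volatilities are state-independent (so $\Gamma\equiv0$), and then the Poisson-equation/generator machinery of \cite{chen2015convergence} with the telescoping, martingale, and ${\cal O}(\epsilon^2)$ remainder split. The obstacles you flag---uniform control of $\psi$ over the admissible $H_t$ (supplied by the trust-region bound on eigenvalues), the correct filtration for the martingale-difference structure, and the $\epsilon_t$ versus $\epsilon_{t+M}$ index shift---are exactly the bookkeeping items the supplement has to discharge.
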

The proof is given in the Supplement. The theorem implies that as $T$ goes to infinity, the bias and the MSE of the estimator vanish. Therefore, HAMCMC is asymptotically unbiased and consistent.

\begin{figure}[t]
\centering
\hspace{-5pt} \includegraphics[width=0.49\columnwidth]{./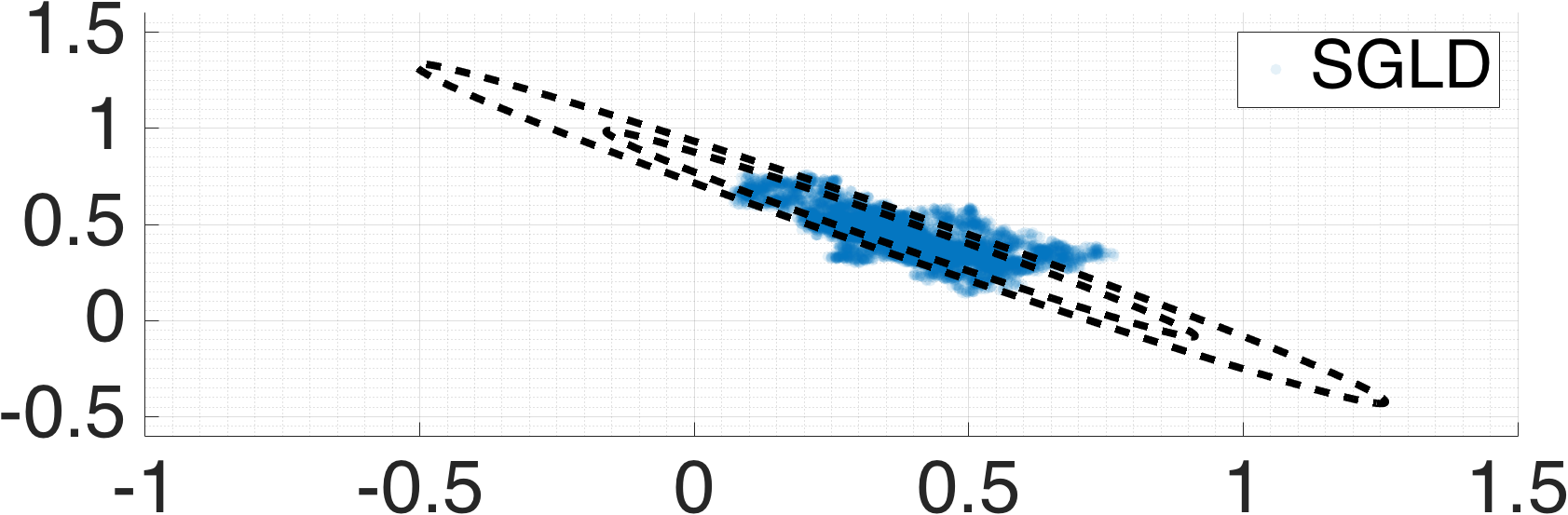}  
\includegraphics[width=0.49\columnwidth]{./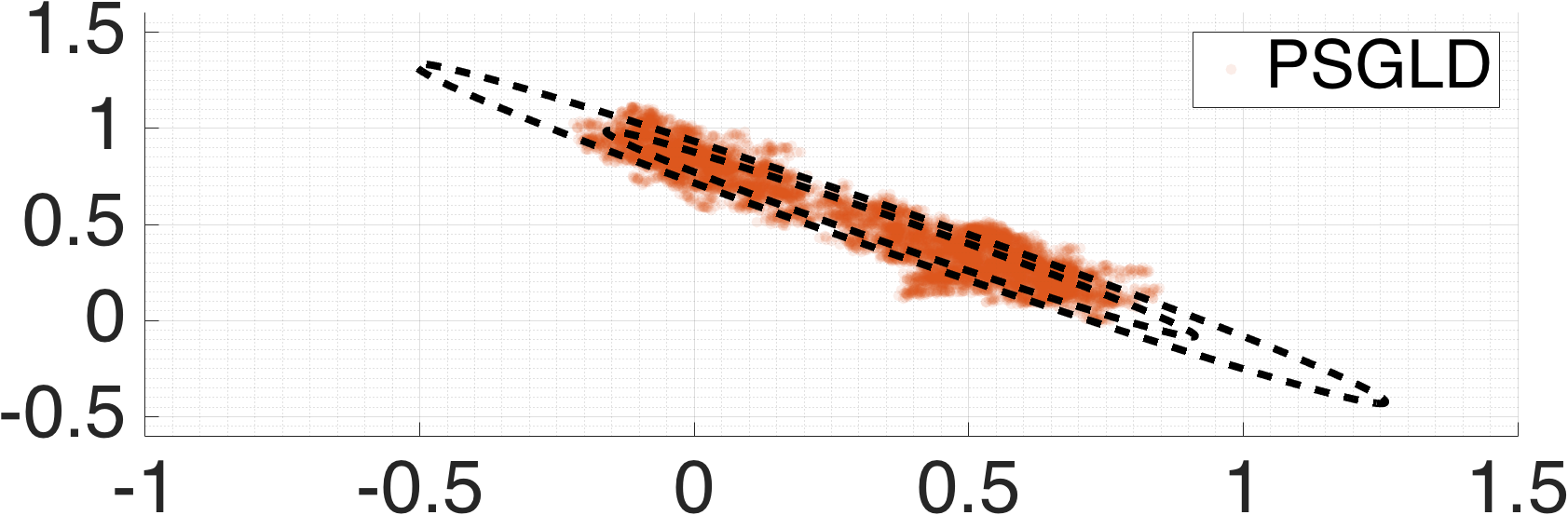}\\ 
\includegraphics[width=0.49\columnwidth]{./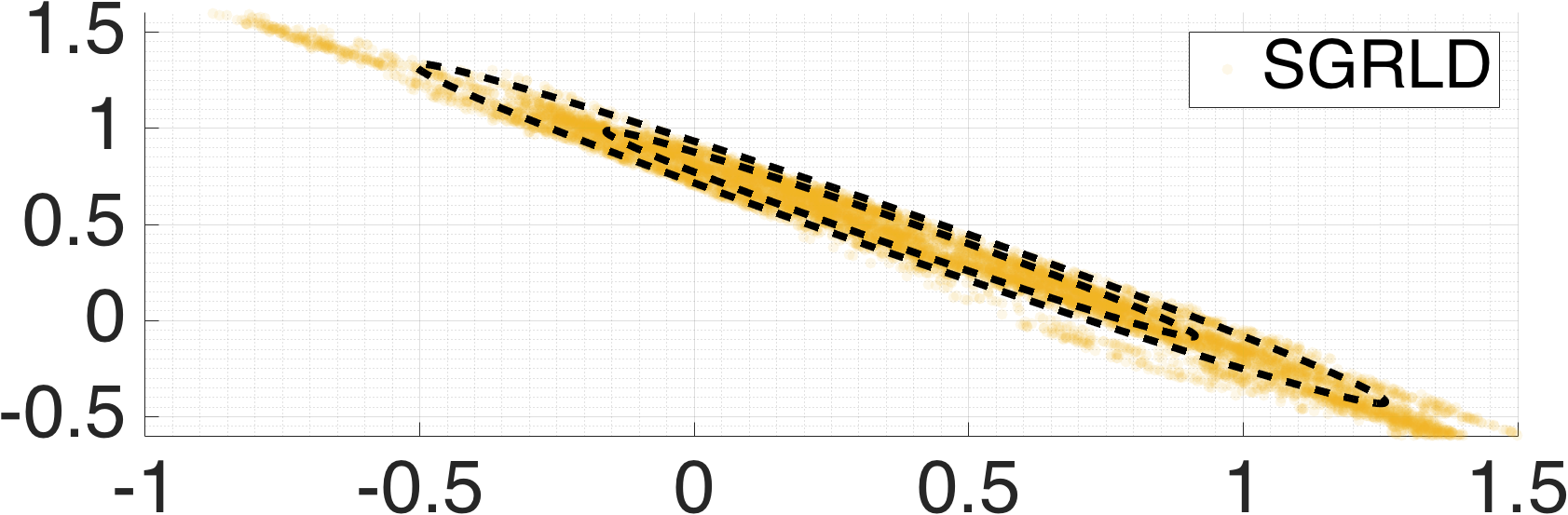} \hfill 
\includegraphics[width=0.49\columnwidth]{./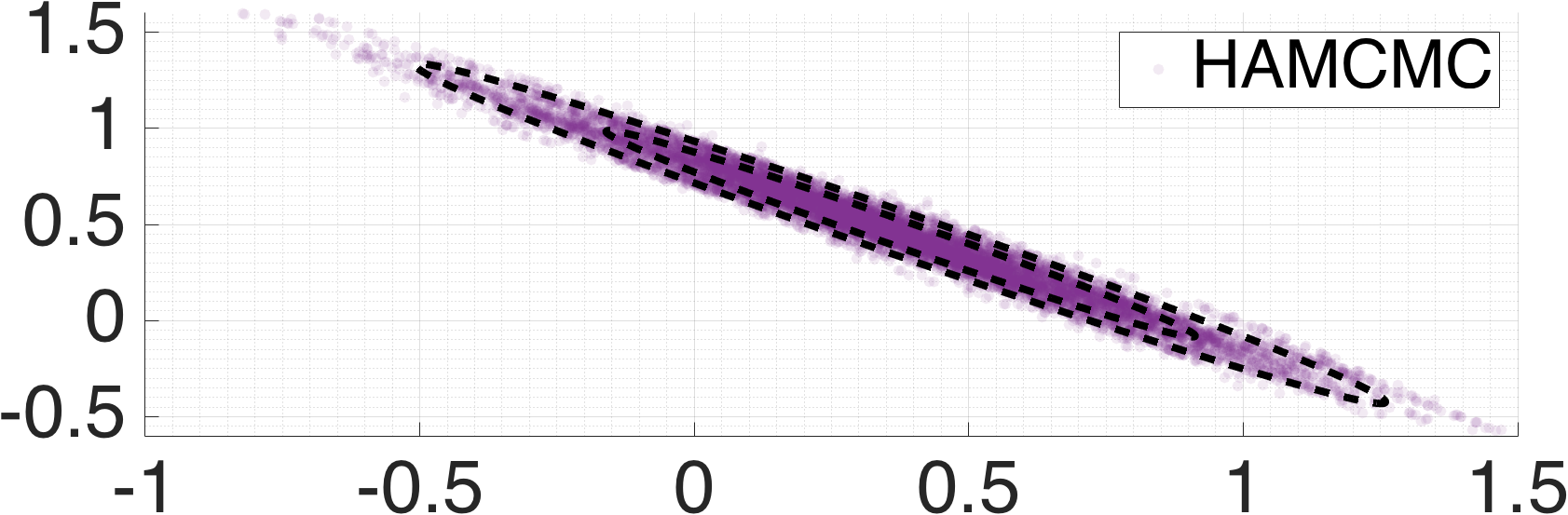}
\caption{The illustration of the samples generated by different SG-MCMC methods. The contours of the true posterior distribution is shown with dashed lines. We set $M=2$ for HAMCMC.}
\label{fig:synth2d}
\vspace{-5pt}
\end{figure}

\section{Experiments}
\label{sec:exp}

\subsection{Linear Gaussian Model}

We conduct our first set of experiments on synthetic data where we consider a rather simple Gaussian model whose posterior distribution is analytically available. The model is given as follows:
\begin{align}
\theta \sim {\cal N}(0, I), \quad x_n|\theta \sim {\cal N}(a_n^\top \theta, \sigma_x^2), \label{eqn:simplemodel}
\end{align}
for all $n$. Here, we assume $\{a_n\}_{n=1}^N$ and $\sigma_x^2$ are known and we aim to draw samples from the posterior distribution $p(\theta|x)$. We will compare the performance of HAMCMC against SGLD, PSGLD, and SGRLD. In these experiments, we determine the step size as $\epsilon_t = (a_\epsilon/t)^{0.51}$. In all our experiments, for each method, we have tried several values for the hyper-parameters with the same rigor and we report the best results. We also report all of the hyper-parameters used in the experiments (including Sections~\ref{sec:amf}-\ref{sec:dmf}) in the Supplement. These experiments are conducted on a standard laptop computer with $2.5$GHz Quad-core Intel Core i$7$ CPU and all the methods are implemented in Matlab.

In the first experiment, we first generate the true $\theta$ and the observations $x$ by using the generative model given in Eq.~\ref{eqn:simplemodel}. We set $D=2$ for visualization purposes and $\sigma_x^2 = 10$, then we randomly generate the vectors $\{a_n\}_n$ in such a way that the posterior distribution will be correlated. Then we generate $T = 20000$ samples by using the SG-MCMC methods, where the size of the data subsamples is selected as $N_\Omega = N/100$. We discard the first half of the samples as burn-in. Figure~\ref{fig:synth2d} shows the typical results of this experiment. We can clearly observe that SGLD cannot explore the posterior efficiently and gets stuck around the mode of the distribution. PSGLD captures the shape of the distribution in a more accurate way than SGLD since it is able to take the scale differences into account; however, it still cannot explore lower probability areas efficiently. Besides, we can see that SGRLD performs much better than SGLD and PSGLD. This is due to the fact that it uses the inverse expected FIM of the \emph{full} problem for generating each sample, where this matrix is simply $(\sum_n a_n a_n^\top/\sigma_x^2 + I)^{-1}$ for this model. Therefore, SGRLD requires much heavier computations as it needs to store a $D\times D$ matrix, computes the Cholesky factors of this matrix, and performs matrix-vector products at each iteration. Furthermore, in more realistic probabilistic models, the expected FIM almost always depends on $\theta$; hence, the computation burden of SGRLD is further increased by the computation of the inverse of the $D\times D$ matrix at each iteration. On the other hand, Fig.~\ref{fig:synth2d} shows that HAMCMC takes the best of both worlds: it is able to achieve the quality of SGRLD since it makes use of dense approximations to the inverse Hessian and it has low computational requirements similar to SGLD and PSGLD. This observation becomes more clear in our next experiment.

\begin{figure}[t]
\centering
\includegraphics[width=0.49\columnwidth]{./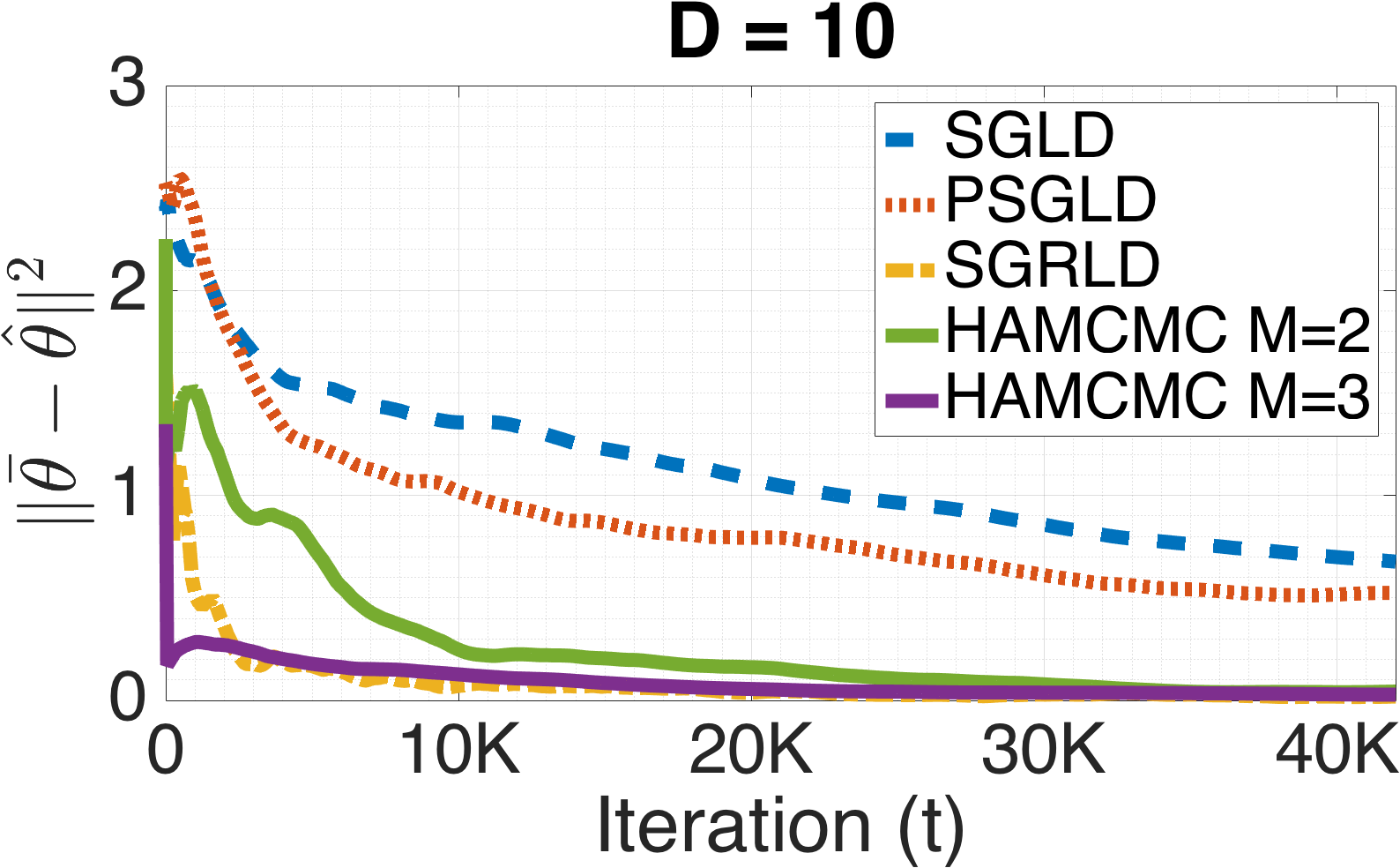}  
\includegraphics[width=0.49\columnwidth]{./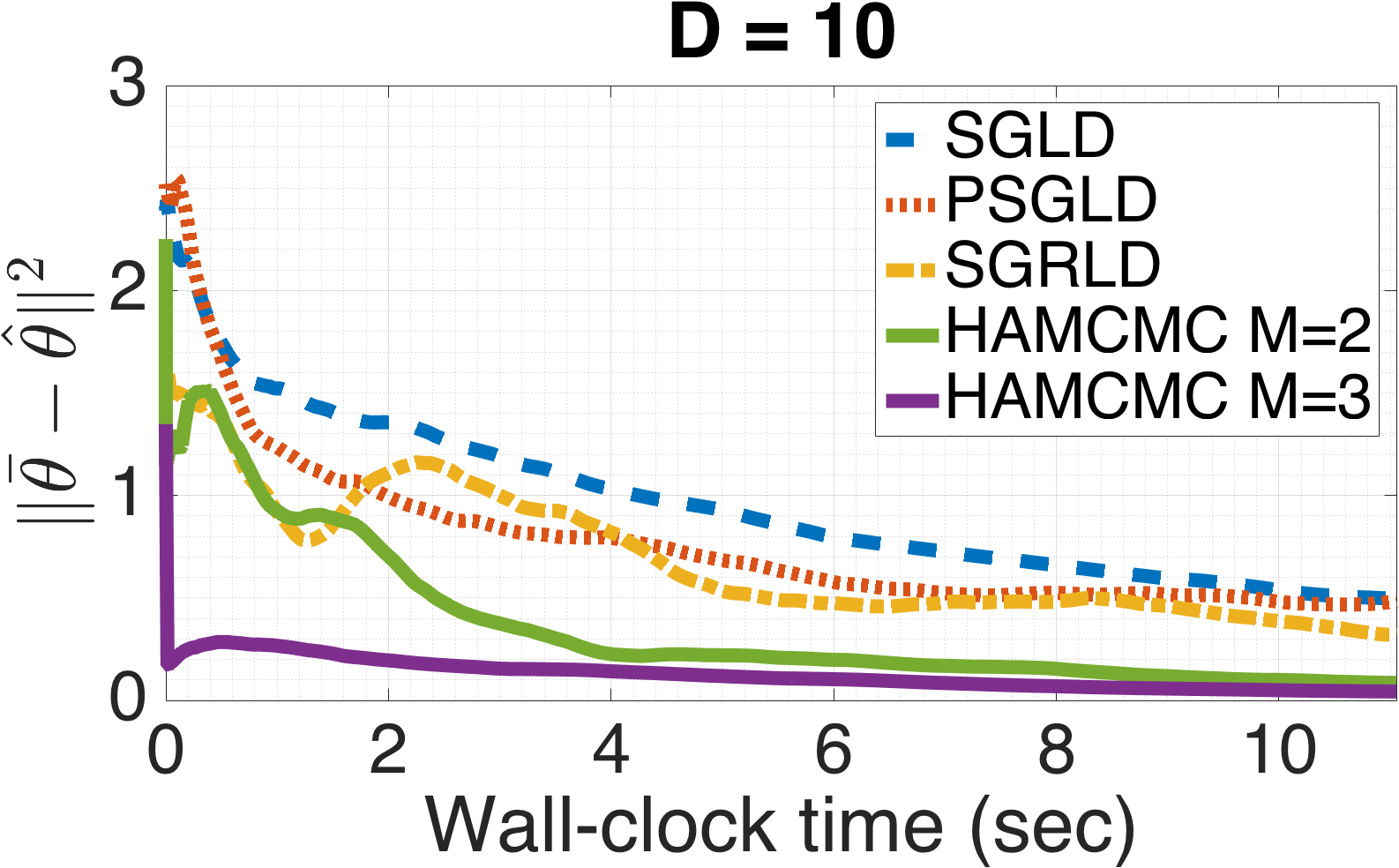} \vspace{0.1cm} \\ 
\includegraphics[width=0.49\columnwidth]{./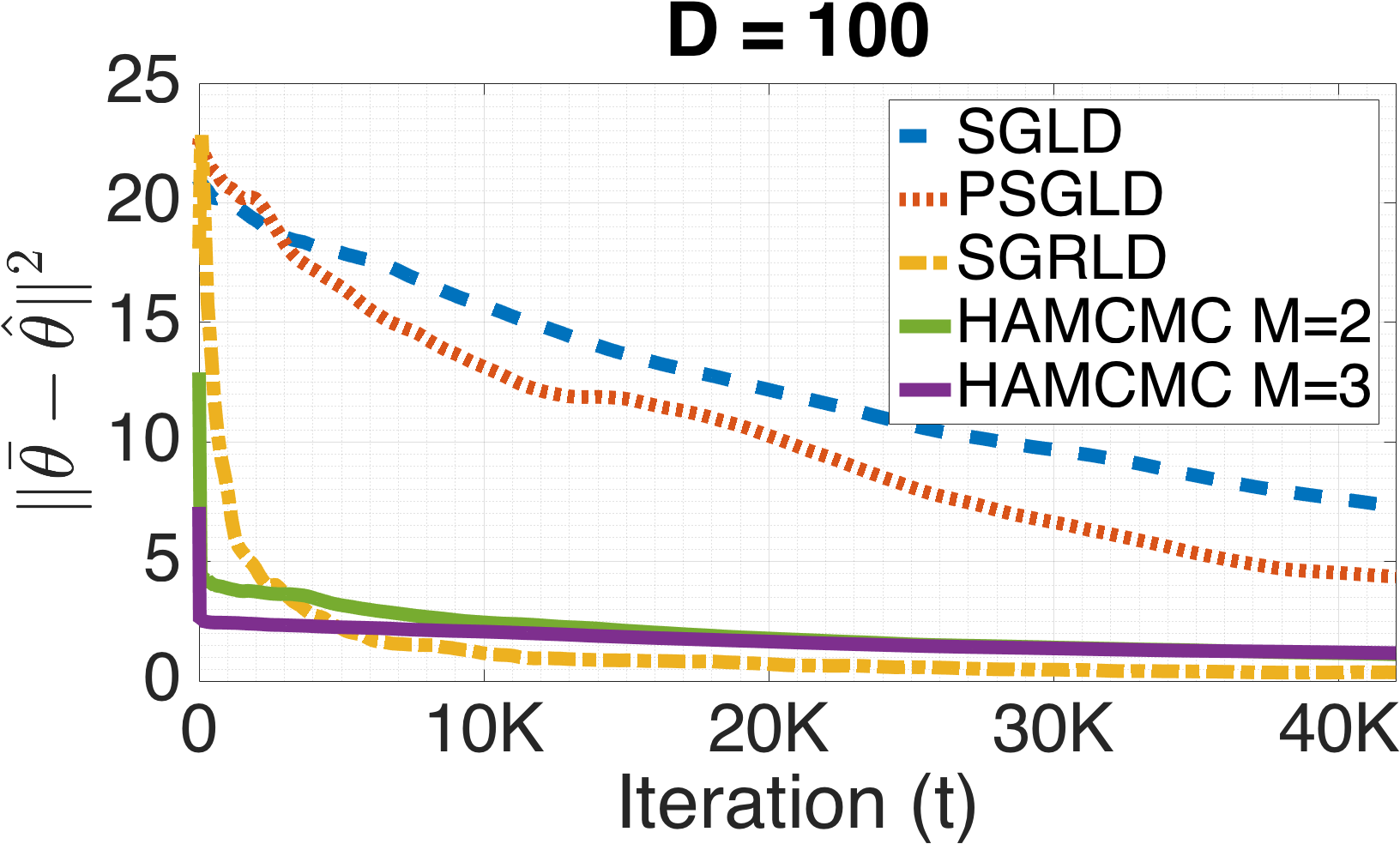}  
\includegraphics[width=0.49\columnwidth]{./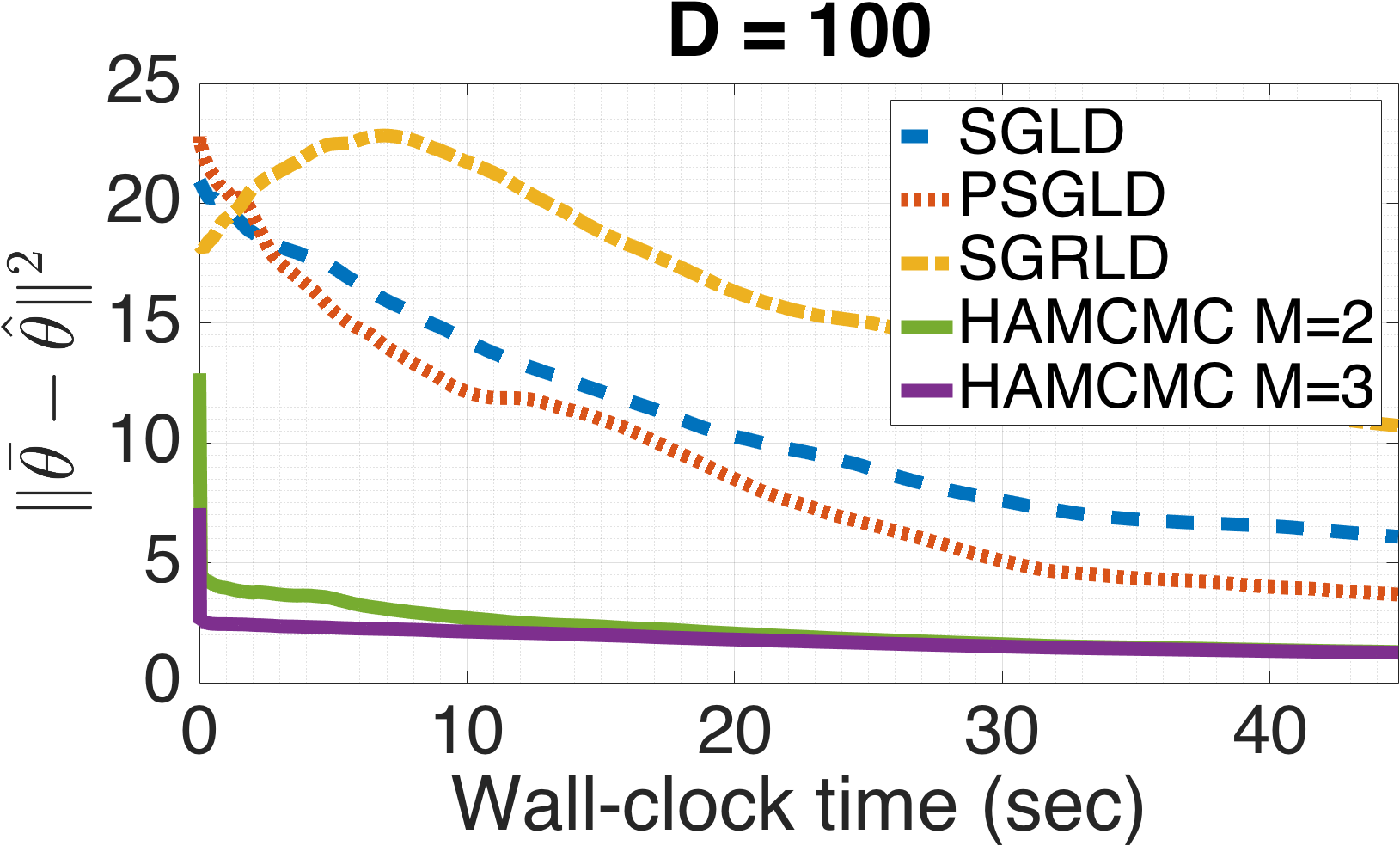} 
\caption{The error between the true posterior mean $\bar{\theta}$ and the Monte Carlo estimates $\hat{\theta}$ for $D=10$ and $D=100$. On the left column, we show the error versus iterations and on the right column we show the error versus wall-clock time.}
\label{fig:synth_err}
\vspace{-10pt}
\end{figure}

In our second experiment, we again generate the true $\theta$ and the observations by using Eq.~\ref{eqn:simplemodel}. Then, we approximate the posterior mean $\bar{\theta} = \int \theta p(\theta|x) d\theta$ by using the aforementioned MCMC methods. For comparison, we monitor $\|\bar{\theta} - \hat{\theta}\|^2$ where $\hat{\theta}$ is the estimate that is obtained from MCMC. Figure~\ref{fig:synth_err} shows the results of this experiment for $D=10$ and $D=100$. In the left column, we show the error as a function of iterations. These results show that the convergence rate of SGRLD (with respect to iterations) is much faster than SGLD and PSGLD. As expected, HAMCMC yields better results than SGLD and PSGLD even in the simplest configuration ($M=2$), and it performs very similarly to SGRLD as we set $M=3$. The performance difference becomes more prominent as we increase $D$.  

An important advantage of HAMCMC is revealed when we take into account the total running time of these methods. These results are depicted in the right column of Fig.~\ref{fig:synth_err}, where we show the error as a function of wall-clock time \footnote{The inverse of the FIM for this model can be precomputed. However, this will not be the case in more general scenarios. Therefore, for fair comparison we perform the matrix inversion and Cholesky decomposition operations at each iteration.}. We can observe that, since SGLD, PSGLD, and HAMCMC have similar computational complexities, the shapes of their corresponding plots do not differ significantly. However, SGRLD appears to be much slower than the other methods as we increase $D$.

\subsection{Alpha-Stable Matrix Factorization}
\label{sec:amf}

In our second set of experiments, we consider a recently proposed probabilistic matrix factorization model, namely alpha-stable matrix factorization ($\alpha$MF), that is given as follows \cite{simsekli2015alpha}: 
\begin{gather}
\nonumber W_{ik} \sim {\cal GG}(a_w,b_w,-2/\alpha), \quad H_{kj} \sim {\cal GG}(a_h,b_h,-2/\alpha)\\
X_{ij} | W,H \sim {\cal S}\alpha{\cal S}_c\bigl([\sum\nolimits_k W_{ik} H_{kj}]^{1/\alpha} \bigr) \label{eqn:amf}
\end{gather}
where $X \in \mathds{C}^{I\times J}$ is the observed matrix with \emph{complex} entries, and $W \in \mathds{R}_+^{I \times K}$ and $H \in \mathds{R}_+^{K \times J}$ are the latent factors. Here ${\cal GG}$ denotes the generalized gamma distribution \cite{stacy1962} and ${\cal S}\alpha{\cal S}_c$ denotes the complex symmetric $\alpha$-stable distribution \cite{samoradnitsky1994stable}. 
Stable distributions are heavy-tailed distributions and they are the limiting distributions in the generalized central limit theorem. They are often characterized by four parameters: ${\cal S}(\alpha, \beta, \sigma,\mu)$, where 
$\alpha \in (0,2]$ is the characteristic exponent, $\beta \in [-1 ,1]$ is the skewness parameter, $\sigma \in (0,\infty)$ is the dispersion parameter, and $\mu \in (-\infty, \infty)$ is the location parameter. The distribution is called symmetric (${\cal S}\alpha{\cal S}$) if $\beta = 0$. The location parameter is also chosen as $0$ in $\alpha$MF.

The probability density function of the stable distributions cannot be written in closed-form except for certain special cases; however it is easy to draw samples from them. Therefore, MCMC has become the de facto inference tool for $\alpha$-stable models like $\alpha$MF \cite{godsill1999bayesian}. By using data augmentation and the product property of stable distributions, \citet{simsekli2015alpha} proposed a partially collapsed Gibbs sampler for making inference in this model. The performance of this approach was then illustrated on a speech enhancement problem.

In this section, we derive a new inference algorithm for $\alpha$MF that is based on SG-MCMC. By using the product property of stable models \cite{kuruoglu1999signal}, we express the model as conditionally Gaussian, given as follows:
\begin{gather}
\nonumber W_{ik} \sim {\cal GG}(a_w,b_w,-2/\alpha), \quad H_{kj} \sim {\cal GG}(a_h,b_h,-2/\alpha)\\
\nonumber \hspace{26pt} \Phi_{ij} \sim {\cal S} \bigl( {\alpha}/{2},1, 2 \bigl(\cos (\pi \alpha)/{4} \bigr)^{2/\alpha},0 \bigr)\\
 X_{ij} | W,H,\Phi \sim {\cal N}_c\bigl(0, \Phi_{ij}[\sum\nolimits_k W_{ik} H_{kj}]^{2/\alpha} \bigr),
\end{gather}
where ${\cal N}_c$ denotes the complex Gaussian distribution. Here, by assuming $\alpha$ is already provided, we propose a Metropolis and SG-MCMC within Gibbs approach, where we generate samples from the full conditional distributions of the latent variables in such a way that we use a Metropolis step for sampling $\Phi$ where the proposal is the prior and we use SG-MCMC for sampling $W$ and $H$ from their full conditionals. Since all the elements of $W$ and $H$ must be non-negative, we apply mirroring at the end of each iteration where we replace negative elements with their absolute values \cite{PatTeh2013a}.

\begin{figure}[t]  
\centering
\includegraphics[width=0.8\columnwidth]{./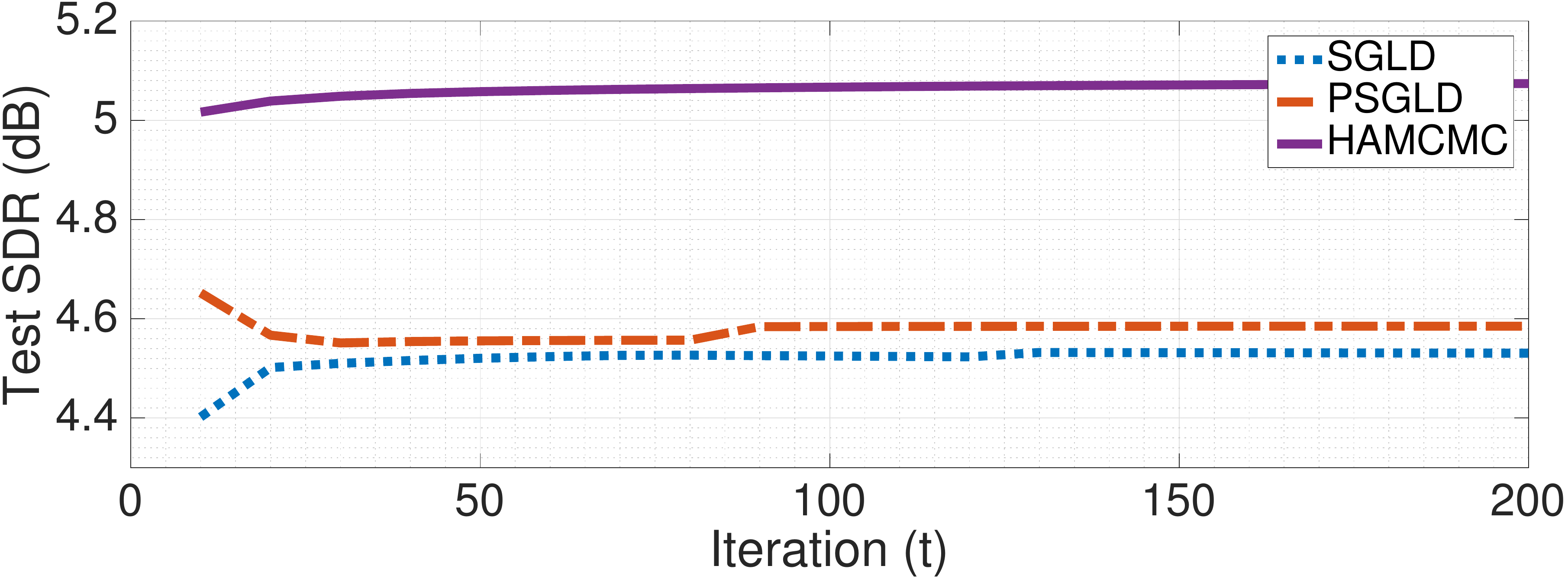}  
\caption{The performance of HAMCMC on a speech denoising application in terms of SDR (higher is better). }
\label{fig:audio}
\vspace{-9pt}
\end{figure}

We evaluate HAMCMC on the same speech enhancement application described in \cite{simsekli2015alpha} by using the same settings. The aim here is to recover clean speech signals that are corrupted by different types of real-life noise signals \cite{hu2007subjective}. We use the Matlab code that can be downloaded from the authors' websites in order to generate the same experimental setting. We first train $W$ on a clean speech corpus, then we fix $W$ at denoising and sample the rest of the variables. For each test signal (out of $80$) we have $I=256$ and $J=140$. The rank is set $K=105$ where the first $100$ columns of $W$ is reserved for clean speech and for speech $\alpha = 1.2$ and for noise $\alpha =1.89$. Note that the model used for denoising is slightly different from Eq.~\ref{eqn:amf} and we refer the readers to the original paper for further details of the experimental setup.

In this experiment, we compare HAMCMC against SGLD and PSGLD. SGRLD cannot be applied to this model since the expected FIM is not tractable. For each method, we set $N_\Omega = IJ/10$ and we generate $2500$ samples where we compute the Monte Carlo averages by using the last $200$ samples. For HAMCMC, we set $M=5$. Fig.~\ref{fig:audio} shows the denoising results on the test set in terms of signal-to-distortion ratio (SDR) when the mixture signal-to-noise ratio is $5$dB. The results show that SGLD and PSGLD perform similarly, where the quality of the outputs of PSGLD is slightly higher than the ones obtained from SGLD. We can observe that HAMCMC provides a significant performance improvement over SGLD and PSGLD, thanks to the usage of local curvature information.

\subsection{Distributed Matrix Factorization}
\label{sec:dmf}

In our final set of experiments, we evaluate HAMCMC on a distributed matrix factorization problem, where we consider the following probabilistic model:
$W_{ik} \sim {\cal N}(0,\sigma_w^2)$, $H_{kj} \sim {\cal N}(0,\sigma_h^2)$, $ X_{ij} | \cdot \sim {\cal N}\bigl(\sum\nolimits_k W_{ik} H_{kj}, \sigma^2_x \bigr) $.
Here, $W \in \mathds{R}^{I \times K}$, $H \in \mathds{R}^{K \times J}$, and $X \in \mathds{R}^{I \times J}$. This model is similar to \cite{salakhutdinov2008bayesian} and is often used in distributed matrix factorization problems \cite{gemulla2011,csimcsekli2015hamsi}.

\begin{figure}[t]
\centering
\includegraphics[width=0.7\columnwidth]{./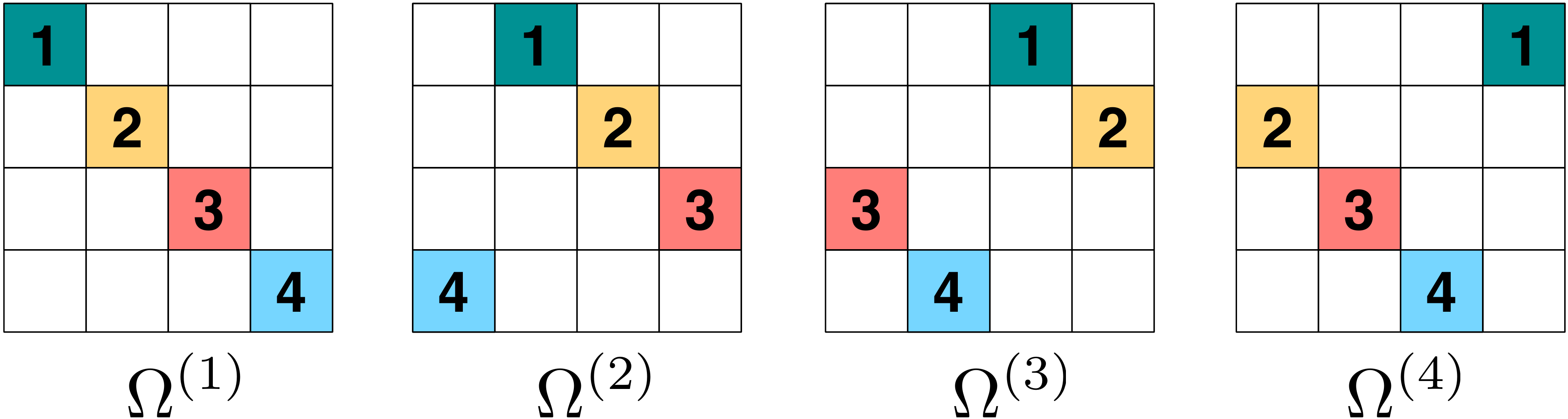}  
\caption{The illustration of the data partitioning that is used in the distributed matrix factorization experiments. The numbers in the blocks denote the computer that stores the corresponding block. }
\label{fig:illus}
\vspace{-14pt}
\end{figure}

Inspired by Distributed SGLD (DSGLD) for matrix factorization \cite{Ahn15} and Parallel SGLD \cite{csimcsekli2015parallel}, we partition $X$ into disjoint subsets where each subset is further partitioned into disjoint blocks. This schema is illustrated in Fig.~\ref{fig:illus}, where $X$ is partitioned into $4$ disjoint subsets $\Omega^{(1)},\dots,\Omega^{(4)}$ and each subset contains $4$ different blocks that will be distributed among different computation nodes. By using this approach, we extend PSGLD and HAMCMC to the distributed setting similarly to DSGLD, where at each iteration the data subsample $\Omega_t$ is selected as $\Omega^{(i)}$ with probability $|\Omega^{(i)}|/\sum_j |\Omega^{(j)}|$. At the end of each iteration the algorithms transfer a small block of $H$ to a neighboring node depending on $\Omega_{t+1}$. PSGLD and HAMCMC further need to communicate other variables that are required for computing $H_t$, where the communication complexity is still linear with $D$. Finally, HAMCMC requires to compute six vector dot products at each iteration that involves all the computation nodes, such as $s_t^\top y_t$. These computations can be easily implemented by using a \emph{reduce} operation which has a communication complexity of ${\cal O}(M^2)$. By using this approach, the methods have the same theoretical properties except that the stochastic gradients would be expected to have larger variances.

For this experiment, we have implemented all the algorithms in C by using a low-level message passing protocol, namely the OpenMPI library. In order to keep the implementation simple, for HAMCMC we set $M=2$. We conduct our experiments on a small-sized cluster with $4$ interconnected computers each of them with $4$ Intel Xeon $2.93$GHz CPUs and $192$ GB of memory.

\begin{figure}
\centering
\includegraphics[width=0.8\columnwidth]{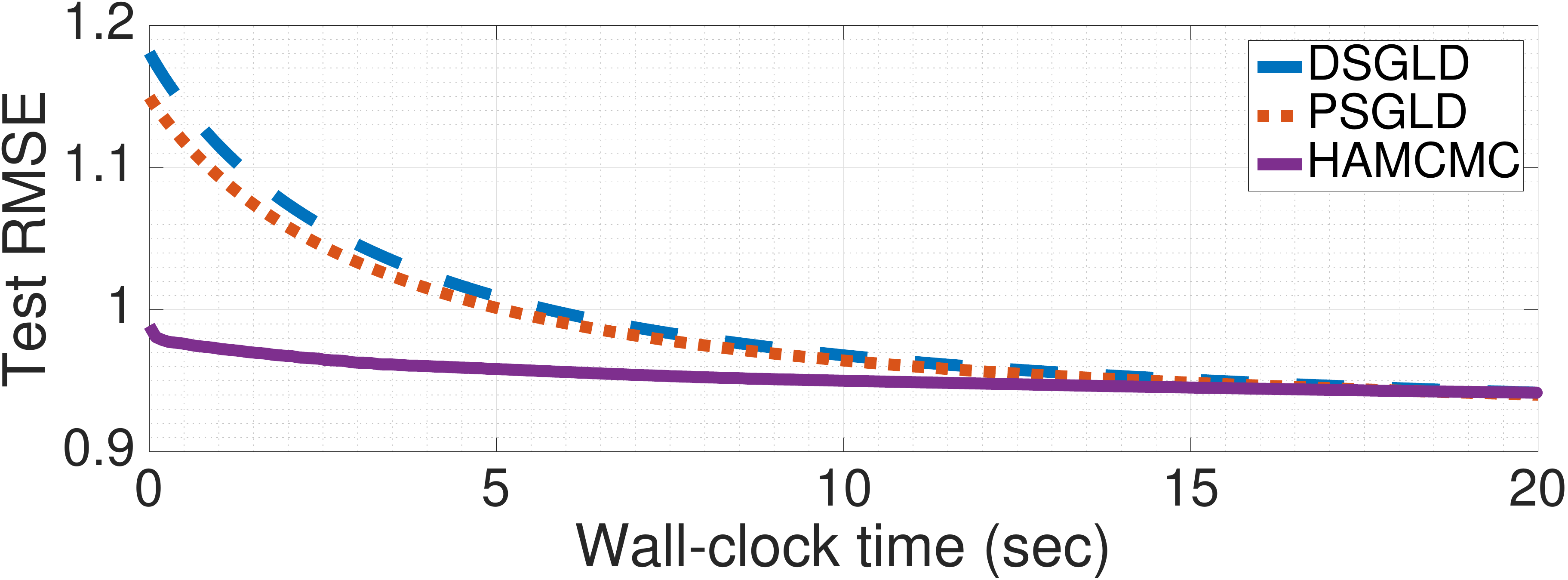}  
\caption{The performance of HAMCMC on a distributed matrix factorization problem.}
\vspace{-12.5pt}
\label{fig:rmse}
\end{figure}

We compare distributed HAMCMC against DSGLD and distributed PSGLD on a large movie ratings dataset, namely the MovieLens $1$M (\url{grouplens.org}). This dataset contains about $1$ million ratings applied to $I = 3883$ movies by $J = 6040$ users, resulting in a sparse observed matrix $X$ with $4.3\%$ non-zero entries. We randomly select $10\%$ of the data as the test set and partition the rest of the data as illustrated in Fig.~\ref{fig:illus}. The rank of the factorization is chosen as $K = 10$. We set $\sigma_w^2 = \sigma_h^2 = \sigma_x^2 = 1$. In these experiments, we use a constant step size for each method and discard the first $50$ samples as burn-in. Note that for constant step size, we no longer have asymptotic unbiasedness; however, the bias and MSE are still bounded.

Fig.~\ref{fig:rmse} shows the comparison of the algorithms in terms of the root mean squared-error (RMSE) that are obtained on the test set. We can observe that DSGLD and PSGLD have similar converges rates. On the other hand, HAMCMC converges faster than these methods while having almost the same computational requirements.

\vspace{-5pt}

\section{Conclusion}

We presented HAMCMC, a novel SG-MCMC algorithm that achieves high convergence rates by taking the local geometry into account via the local Hessian that is efficiently computed by the L-BFGS algorithm. HAMCMC is both efficient and powerful since it uses dense approximations of the inverse Hessian while having linear time and memory complexity. We provided formal theoretical analysis and showed that HAMCMC is asymptotically consistent. Our experiments showed that HAMCMC achieves better convergence rates than the state-of-the-art while keeping the computational cost almost the same. As a next step, we will explore the use of HAMCMC in model selection applications \cite{simsekli2016stochastic}.

\clearpage

\section*{Acknowledgments}
The authors would like to thank to \c S. \.{I}lker Birbil and Figen \"{O}ztoprak for fruitful discussions on Quasi-Newton methods. The authors would also like to thank to Charles Sutton for his explanations on the Ensemble Chain Adaptation framework. This work is partly supported by the French National Research Agency (ANR) as a part of the EDISON 3D project (ANR-13-CORD-0008-02). A. Taylan Cemgil is supported by T\"{U}B\.{I}TAK 113M492 (Pavera) and Bo\u{g}azi\c ci University BAP 10360-15A01P1. 

\bibliography{./sgmcmc}
\bibliographystyle{icml2016}

\end{document}